\DeclareMathOperator{\E}{\mathbb{E}}
\newcommand{\w}{{x}}
\newcommand{\what}{{\hat{x}}}
\newcommand{\bhat}{{\widetilde{x}}}
\newtheorem{theorem}{Theorem}[section]
\newtheorem{assumption}[theorem]{Assumption}
\begin{document}
\title{A Biased Estimator for MinMax Sampling and Distributed Aggregation}

\author[inst1]{Joel Wolfrath}
\author[inst1]{Abhishek Chandra}

\affiliation[inst1]{organization={University of Minnesota},%Department and Organization
            %addressline={Address One}, 
            city={Minneapolis},
            %postcode={00000}, 
            state={Minnesota},
            country={USA}}

\begin{abstract}
MinMax sampling is a technique for downsampling a real-valued vector which minimizes the maximum variance over all vector components.
This approach is useful for reducing the amount of data that must be sent over a constrained network link (e.g. in the wide-area).
MinMax can provide unbiased estimates of the vector elements, along with unbiased estimates of aggregates when vectors are combined from multiple locations.
In this work, we propose a biased MinMax estimation scheme, \textit{B-MinMax}, which trades an increase in estimator bias for a reduction in variance.
We prove that when no aggregation is performed, B-MinMax obtains a strictly lower MSE compared to the unbiased MinMax estimator.
When aggregation is required, B-MinMax is preferable when sample sizes are small or the number of aggregated vectors is limited. 
Our experiments show that this approach can substantially reduce the MSE for MinMax sampling in many practical settings.
\end{abstract}

%\begin{keyword}
%    Sampling \sep Estimation Theory \sep Distributed Aggregation
%\end{keyword}

\maketitle

\section{Introduction}
Modern applications increasingly generate and persist data in distinct geographical locations.
Analytics over geo-distributed data sources is constrained by wide-area network (WAN) links, which makes it slow and expensive to aggregate data in a centralized location.
Deriving insights from these data sources is an important area of research, and includes techniques such as federated learning, distributed state aggregation, and relational queries.
These techniques still require data to be sent over the network (e.g. model weights) which can increase latency due to the limited WAN throughput.

MinMax sampling can be used to alleviate these issues by reducing the size of the data transfer~\cite{minmax}.
This approach leverages Poisson sampling to send only a representative sample of the data over the network. 
MinMax sampling has already proven useful for federated learning~\cite{mm-fl2} and aggregating distributed relational data~\cite{distributed-state, plexus}.
It has also proven to be more efficient compared to other sampling strategies in the literature~\cite{iceberg,topk1,topk2}.
However, requiring estimates to be unbiased results in an unnecessarily high mean squared error (MSE) in certain settings.
For this reason, we consider introducing a small amount of bias into the estimation scheme to reduce the variance and the overall MSE of the estimator. \\

\textbf{Contributions.} We make the following research contributions:
\begin{itemize}
\item We propose \textit{B-MinMax}, a biased MinMax estimator which achieves a lower MSE compared to MinMax in several settings.
\item We introduce a mechanism which allows B-MinMax to defer to unbiased estimation when preferable.
\item We empirically evaluate the proposed approach across a variety of data distributions and aggregation settings.
\end{itemize}

\noindent
Our experiments show that a principled use of the biased B-MinMax estimator can result in a substantially lower MSE across a variety of applications.

\section{MinMax Estimation}

We are given a set of $k$ vectors, $\mathbf{x}_i \in \mathbb{R}^d$ for $i \in 1..k$, which may represent model weights or some other local value\footnote{This analysis assumes equal vector dimension across sites to simplify the presentation. In practice, the local vectors may have different dimension or correspond to different quantities. This can be accommodated by associating a key with each vector value $\w_{i,j}$, without loss of generality.}.
The task is to obtain a centralized estimate of $\mathbf{x} = \sum_{i=1}^k \mathbf{x}_i$.
Given a sample size $n_i$ associated with each site, MinMax sampling proceeds by locally assigning a probability $p_{i,j}$ to each vector component $\w_{i,j}$, and performing Poisson sampling to determine a subset to send over the network.
The sampling probability is given by:

\begin{equation}
    p_{i,j} = \frac{\w_{i,j}^2}{\w_{i,j}^2 + C_i}
\end{equation}

\noindent
where $C_i$ is selected to satisfy:

\begin{equation}
    n_i = \sum_{j=1}^d \frac{\w_{i,j}^2}{\w_{i,j}^2 + C_i}
\end{equation}

\noindent
which ensures that the expected sample size is $n_i$.
If $\w_{i,j}$ is not selected to be in the sample, it is assumed to be zero.
If $\w_{i,j}$ is selected, the estimator $\what_{i,j}$ is computed and sent over the network, which contains an adjustment to maintain unbiasedness.
More precisely, $\what_{i,j}$ is computed as:

\begin{equation}
    \what_{i,j} =
    \begin{cases}
      \frac{\w_{i,j}}{p_{i,j}}, & \text{with probability}\ \; p_{i,j}\\
      0, & \text{with probability}\ \; 1 - p_{i,j}
    \end{cases}
\end{equation}

\noindent
which yields a corresponding MSE of:
\begin{equation}
\label{mse-minmax}
    \textrm{MSE}(\what_{i,j}) = p_{i,j}\left(\frac{\w_{i,j}}{p_{i,j}} - \w_{i,j} \right)^2 + (1 - p_{i,j})\w_{i,j}^2
\end{equation}
%\begin{equation}
%    \what_{i,j} = \frac{\w_{i,j}}{p_{i,j}}
%\end{equation}

\noindent
It is known that $\what_{i,j}$ is an unbiased estimator for $\w_{i,j}$~\cite{minmax}.
Furthermore, combining values from each site results in an unbiased estimate for components of the aggregate vector $\mathbf{x}$, where $\w_{\cdot j}$ is estimated by:

\begin{equation}
    \what_{\cdot j} = \sum_{i=1}^k \what_{i,j}
\end{equation}

\section{The B-MinMax Estimator}
We propose another estimator, B-MinMax, which makes a small adjustment to MinMax estimation.
Rather than adjusting the weight to be unbiased (i.e. sending $\w_{i,j} / p_{i,j}$) we simply send the original $\w_{i,j}$ value over the network.
Define the B-MinMax estimator to be:
%\begin{equation}
%\bhat_{i,j} = \w_{i,j}
%\end{equation}
\begin{equation}
    \bhat_{i,j} =
    \begin{cases}
      \w_{i,j}, & \text{with probability}\ \; p_{i,j}\\
      0, & \text{with probability}\ \; 1 - p_{i,j}
    \end{cases}
\end{equation}

\noindent
This approach introduces bias into the original MinMax estimation, but has the potential to reduce the overall MSE.
%We now explore the properties of our proposed estimator.
Figure \ref{fig:bmm} shows an overview of B-MinMax sampling.
The algorithm closely mirrors the MinMax scheme, except samples are not adjusted prior to sending (i.e. we send $\w_{i,j}$ directly rather than $\w_{i,j} / p_{i,j}$).

\begin{figure}
    \centering
    \includegraphics[width=\linewidth]{./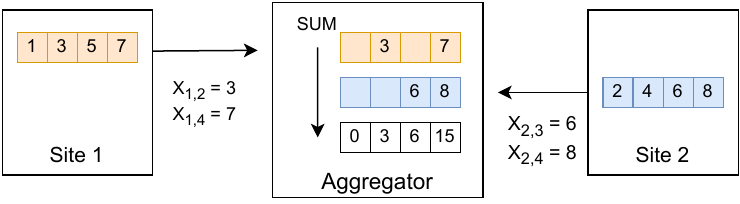}
    \caption{Overview of B-MinMax Sampling with $k=2$ sites and a sample size of $n_1 = n_2 = 2$.}
    \label{fig:bmm}
\end{figure}

\begin{theorem}
    \begin{equation}
    \label{mse-bminmax}
        \textrm{MSE}(\bhat_{i,j}) = \w_{i,j}^2 (1 - p_{i,j})
    \end{equation}
\end{theorem}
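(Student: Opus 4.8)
The plan is to compute the mean squared error directly from its definition as the expected squared deviation of the estimator from the true value, namely $\textrm{MSE}(\bhat_{i,j}) = \E[(\bhat_{i,j} - \w_{i,j})^2]$. Since $\bhat_{i,j}$ is a discrete random variable taking only two values, this expectation collapses to a finite sum over the two outcomes, each weighted by its sampling probability, in exactly the same manner as the unbiased MSE formula of Equation \ref{mse-minmax}.

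First I would enumerate the two cases from the definition of $\bhat_{i,j}$. In the event that $\w_{i,j}$ is selected, which occurs with probability $p_{i,j}$, the estimator returns $\bhat_{i,j} = \w_{i,j}$, so the squared error $(\bhat_{i,j} - \w_{i,j})^2$ vanishes identically. In the complementary event, which occurs with probability $1 - p_{i,j}$, the estimator returns $0$, so the squared error equals $\w_{i,j}^2$.

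Next I would assemble these contributions into the probability-weighted sum $p_{i,j}(\w_{i,j} - \w_{i,j})^2 + (1 - p_{i,j})\w_{i,j}^2$. The first term is zero, which leaves $\textrm{MSE}(\bhat_{i,j}) = \w_{i,j}^2(1 - p_{i,j})$, the claimed identity. This is the same two-term construction used for $\what_{i,j}$, with the adjusted value $\w_{i,j}/p_{i,j}$ in the selected case simply replaced by $\w_{i,j}$.

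Because the argument is an elementary evaluation of a two-point expectation, I expect no substantive obstacle. The one point requiring care is to remember that $\bhat_{i,j}$ is a biased estimator, so its MSE is not equal to its variance; however, taking the expectation of the squared error directly captures both the variance and the squared-bias contributions at once, so no separate bias-variance decomposition is needed to reach the result.
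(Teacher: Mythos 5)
Your proof is correct, but it takes a genuinely different route from the paper. You compute the MSE directly from its definition as the two-point expectation $\E[(\bhat_{i,j} - \w_{i,j})^2] = p_{i,j}(\w_{i,j} - \w_{i,j})^2 + (1-p_{i,j})\w_{i,j}^2 = \w_{i,j}^2(1-p_{i,j})$, whereas the paper performs a bias--variance decomposition: it first derives $\textrm{Bias}(\bhat_{i,j}) = \w_{i,j}(p_{i,j}-1)$ and $\textrm{Var}(\bhat_{i,j}) = (p_{i,j} - p_{i,j}^2)\w_{i,j}^2$ separately, then sums the variance and squared bias. Your direct evaluation is shorter and, as you correctly note, automatically captures both contributions without needing the decomposition---a cleaner argument for proving the theorem in isolation. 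What the paper's longer route buys is the explicit bias and variance expressions themselves, which are not throwaway intermediates: the later adaptive B-MinMax mechanism defines the per-site summaries $\bar{v}_i$ and $\bar{b}_i$ in terms of $\textrm{Var}(\bhat_{i,j})$ and $\lvert \textrm{Bias}(\bhat_{i,j}) \rvert$, and the aggregated MSE expression for $\bhat_{\cdot j}$ is likewise built from a variance term plus a squared-bias term. So the paper's decomposition does double duty, supplying the building blocks for the multi-site analysis, while your argument proves exactly the stated identity with less machinery.
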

    
\begin{proof}
    The estimator bias is given by:
\begin{align*}
    \textrm{Bias}(\bhat_{i,j})
    &= \E[\bhat_{i,j}] - \w_{i,j} \\
    %&= p_{i,j} \w_{i,j} + (1-p_{i,j})0 - \w_{i,j} \\
    &= \w_{i,j}(p_{i,j} - 1)
\end{align*}

\noindent
The corresponding variance is:
\begin{align*}
    \textrm{Var}(\bhat_{i,j})
    &= p_{i,j}(\w_{i,j} - \w_{i,j}p_{i,j})^2 + (1-p_{i,j})(p_{i,j}\w_{i,j})^2 \\
    &= (p_{i,j} - p_{i,j}^2)\w_{i,j}^2
\end{align*}

\noindent
We therefore obtain the following expression for the MSE:
\begin{align*}
    \textrm{MSE}(\bhat_{i,j})
    &= (p_{i,j} - p_{i,j}^2)\w_{i,j}^2 + \left[ \w_{i,j}(p_{i,j} - 1) \right]^2 \\
    &= \w_{i,j}^2 (1 - p_{i,j})
\end{align*}
\end{proof}

We now explore how the MSE for MinMax and B-MinMax compares across multiple settings.

\subsection{B-MinMax without Aggregation}
If we are collecting data from a single site, the B-MinMax estimator always outperforms the standard MinMax approach in terms of MSE.

\begin{assumption}[]
    \label{ma1}
        We assume $\w_{i,j} \neq 0$. A value of zero may always be estimated with perfect accuracy and no network overhead. \\
    \end{assumption}
    
\begin{assumption}[]
    \label{ma2}
    We assume $p_{i,j} \in (0,1)$.
    This allows us to exclude unrealistic samples sizes, i.e. where $n_i = 0$ or $n_i = d$. \\
\end{assumption}
    
\begin{theorem}
    Under assumptions \ref{ma1} and \ref{ma2}, given a single site $k=1$, $\textrm{MSE}(\bhat_{i,j}) < \textrm{MSE}(\what_{i,j})$.
\end{theorem}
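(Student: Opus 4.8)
The plan is to reduce both MSE expressions to clean closed forms and compare them directly. Since there is a single site ($k=1$), no aggregation occurs, so it suffices to establish the inequality componentwise for an arbitrary fixed $(i,j)$. The B-MinMax MSE is already supplied by the preceding theorem as $\textrm{MSE}(\bhat_{i,j}) = \w_{i,j}^2(1-p_{i,j})$, so the only remaining work is to put $\textrm{MSE}(\what_{i,j})$ from Equation~(\ref{mse-minmax}) into a comparable form.

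First I would simplify the MinMax MSE. The adjustment term factors as $\frac{\w_{i,j}}{p_{i,j}} - \w_{i,j} = \w_{i,j}\frac{1-p_{i,j}}{p_{i,j}}$, so multiplying its square by $p_{i,j}$ gives $\w_{i,j}^2\frac{(1-p_{i,j})^2}{p_{i,j}}$. Adding the second term $(1-p_{i,j})\w_{i,j}^2$ and pulling out the common factor $\w_{i,j}^2(1-p_{i,j})$, the bracketed quantity collapses to $\frac{1}{p_{i,j}}$, yielding the compact expression
\begin{equation*}
    \textrm{MSE}(\what_{i,j}) = \frac{\w_{i,j}^2(1-p_{i,j})}{p_{i,j}}.
\end{equation*}

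With both quantities in hand, the comparison is immediate: $\textrm{MSE}(\bhat_{i,j}) = p_{i,j}\,\textrm{MSE}(\what_{i,j})$. By Assumption~\ref{ma2}, $p_{i,j}\in(0,1)$, so this scaling factor is strictly less than one. To convert the relation into the strict inequality $\textrm{MSE}(\bhat_{i,j}) < \textrm{MSE}(\what_{i,j})$ rather than a mere bound, I must verify that the reference MSE is strictly positive. This is exactly where both assumptions enter: $\w_{i,j}^2 > 0$ follows from Assumption~\ref{ma1} ($\w_{i,j}\neq 0$), and $1-p_{i,j} > 0$ together with $p_{i,j} > 0$ follows from Assumption~\ref{ma2}. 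Consequently $\textrm{MSE}(\what_{i,j}) > 0$, and multiplying a strictly positive quantity by $p_{i,j} < 1$ produces a strictly smaller value.

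I do not anticipate a genuine obstacle here; the result is essentially algebraic. The one point requiring care is bookkeeping on the hypotheses: it would be easy to derive only $\textrm{MSE}(\bhat_{i,j}) \le \textrm{MSE}(\what_{i,j})$ and overlook that strictness relies on both the nonvanishing of $\w_{i,j}$ and the open-interval constraint on $p_{i,j}$. Making explicit that each assumption is load-bearing is the main thing to get right.
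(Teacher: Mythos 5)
Your proof is correct and is essentially the paper's argument in multiplicative rather than additive form: both reduce to the closed form $\textrm{MSE}(\what_{i,j}) = \w_{i,j}^2(1-p_{i,j})/p_{i,j}$ and compare it algebraically with $\w_{i,j}^2(1-p_{i,j})$, the paper by showing the difference equals $\w_{i,j}^2(p_{i,j}-1)^2/p_{i,j} > 0$, you by noting the ratio is $p_{i,j} < 1$ and verifying strict positivity from both assumptions. Incidentally, your formulation sidesteps a sign-labeling slip in the paper's display, which writes $\textrm{MSE}(\bhat_{i,j}) - \textrm{MSE}(\what_{i,j})$ for a quantity that actually equals $\textrm{MSE}(\what_{i,j}) - \textrm{MSE}(\bhat_{i,j})$.
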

    
\begin{proof}
    Using the results in equations \ref{mse-minmax} and \ref{mse-bminmax}, the difference in MSE can be computed as:
    \begin{align*}
        \textrm{MSE}(\bhat_{i,j}) - \textrm{MSE}(\what_{i,j})
        &= \w_{i,j}^2 \left( \frac{1}{p_{i,j}} + p_{i,j} -2 \right) \\
        &= \frac{\w_{i,j}^2 (p_{i,j}-1)^2}{p_{i,j}} \\
        &> \; 0 
    \end{align*}
    %\noindent
    %Therefore, $\textrm{MSE}(\bhat_{i,j}) < \textrm{MSE}(\what_{i,j})$.
\end{proof}

Not only does B-MinMax obtain a strictly smaller MSE for $k=1$, the right-hand limit of the difference in MSE diverges to $\infty$ as $p_{i,j} \rightarrow 0$.
Therefore, the B-MinMax estimator becomes increasingly better (in terms of MSE) as $p_{i,j}$ decreases.

\subsection{B-MinMax with Aggregation}
We have shown that B-MinMax is always preferable to the unbiased MinMax scheme when there is no aggregation involved across multiple sites.
We now explore the trade-offs when using B-MinMax combined with aggregation.

When aggregating data across multiple sites, the expression for the MSE now becomes:
\begin{align}
MSE(\bhat_{\cdot j})
&= \textrm{Var}\left[\sum_{i=1}^k \bhat_{i,j}\right] + \textrm{Bias}\left[\sum_{i=1}^k \bhat_{i,j}\right]^2
\end{align}

\begin{figure}
    \centering
    \begin{subfigure}{.49\linewidth}
      \centering
      \includegraphics[width=\linewidth]{./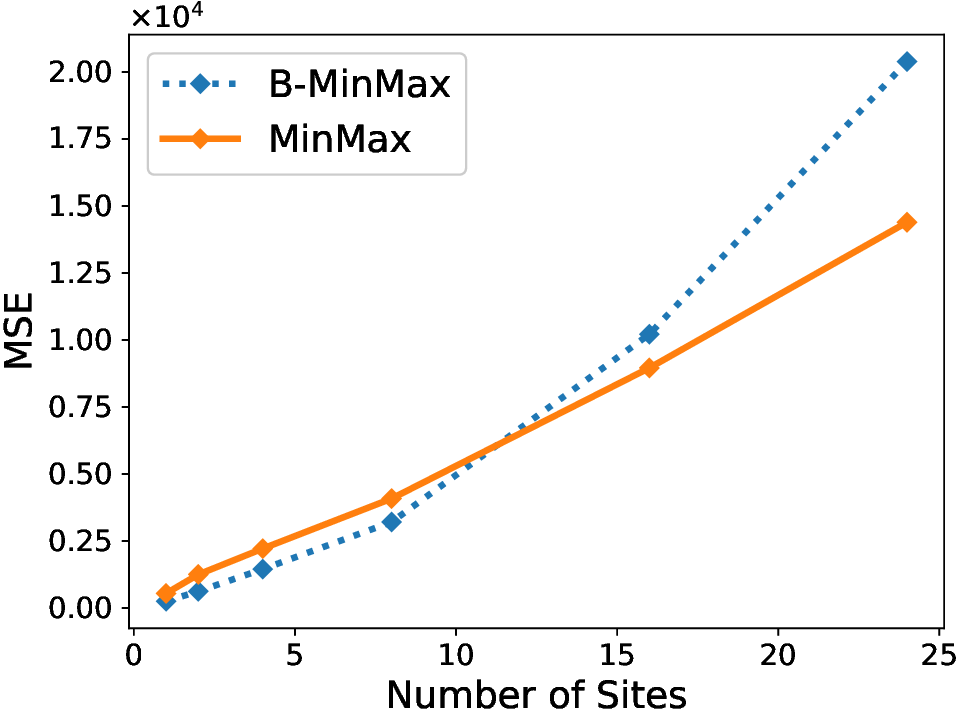}
      \caption{2x compression ratio}
      \label{fig:bmm-2x}
    \end{subfigure}
    %\par\bigskip
    \begin{subfigure}{.49\linewidth}
      \centering
      \includegraphics[width=\linewidth]{./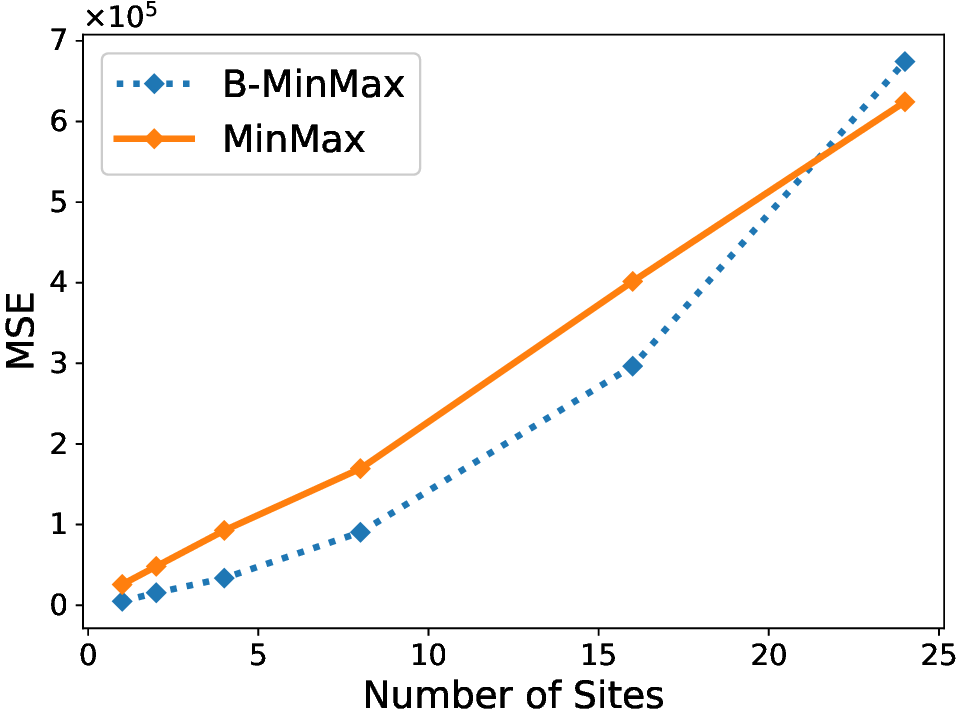}
      \caption{4x compression ratio}
      \label{fig:bmm-4x}
    \end{subfigure}
    \caption{MSE using aggregation across multiple sites, with $\w_{i,j} \sim Zipf(1)$}
    \label{fig:bmm2}
\end{figure}

\noindent
This expression is no longer guaranteed to be smaller than MinMax, due to the squared bias term.
This effect can be observed in figure \ref{fig:bmm2}.
As the number of sites increases, we observe that eventually the MinMax estimator produces a smaller MSE compared to B-MinMax.
We therefore require a mechanism for determining when B-MinMax is preferable. \\

\noindent
\textbf{Adaptive B-MinMax.} 
When aggregation is involved, we propose estimating the MSE of both MinMax and B-MinMax on the fly, and selecting the approach with the smaller MSE.
This adaptive approach can be formulated as follows:

\begin{equation}
    \bhat_{\cdot j} = \sum_{i=1}^k \frac{\bhat_{i,j}}{\theta}
\end{equation}

\noindent
where:

\begin{equation}
    \theta =
    \begin{cases}
      1, & \text{if}\ \textrm{MSE}(\bhat_{\cdot j}) - \textrm{MSE}(\what_{\cdot j}) > 0 \\
      p_{i,j}, & \text{otherwise}
    \end{cases}
\end{equation}

%We note that this formulation works in the case of no aggregation, since $\theta = 1$ always has a lower MSE.
%We still require a mechanism for estimating the MSE to correctly evaluate $\theta$.
To properly evaluate $\theta$, we require a mechanism for estimating $\textrm{MSE}(\bhat_{\cdot j})$.
We propose sending two additional data points from each site: $\bar{v}_i$ and $\bar{b}_i$, the average variance and bias across all local elements at site $i$.
More specifically, let:
\begin{equation}
    \bar{v}_i = \frac{1}{d} \sum_{j=1}^d \textrm{Var}(\bhat_{i,j})
\end{equation}

\noindent
and
\begin{equation}
    \bar{b}_i = \frac{1}{d} \sum_{j=1}^d \; \mid \textrm{Bias}(\bhat_{i,j})  \mid
\end{equation}
\noindent
Then, we can estimate $\textrm{MSE}(\bhat_{\cdot j})$ as:
\begin{equation}
    \widehat{MSE}(\bhat_{\cdot j}) = \textrm{Var}\left[\sum_{i=1}^k \bar{v}_{i}\right] + \textrm{Bias}\left[\sum_{i=1}^k \bar{b}_{i}\right]^2 
\end{equation}

%The value of $\delta_{\what}$ can be completed exactly with the following procedure:
%\begin{enumerate}
%\item After each site $i$ has performed local Poisson sampling, it also computes a local sum of squared errors $SSE_i = \sum_j (\what_{i,j} - \w_{i,j})^2$ and transfers it along with the samples.
%\item The aggregating server computes $\delta_{\what} = \sum_i C_i - (1/\mid S \mid) \sum_i SSE_i$.
%\end{enumerate}

\noindent
Note that this approach can be implemented with constant network overhead for each site, since we only require two additional data points ($\bar{v}_i$ and $\bar{b}_i$) to be sent over the network.

\section{Empirical Evaluation}

To validate our approach, we conduct experiments with both synthetic and real-world data.
We test across a variety of factors, including different compression ratios the amount of aggregation (number of sites).
In each case, we compare our adaptive B-MinMax scheme against the MinMax baseline.
We consider two different datasets: a set of synthetic vectors following a Zipfian distribution and vectors containing the trained model weights for a ResNet18 model~\cite{resnet}.
Unless otherwise stated, we aggregate vectors across $k=4$ sites with $d=10000$ elements and a target compression ratio of 4.

\subsection{Compression Ratio}

\begin{figure}
    \centering
    \begin{subfigure}{.49\linewidth}
      \centering
      \includegraphics[width=\linewidth]{./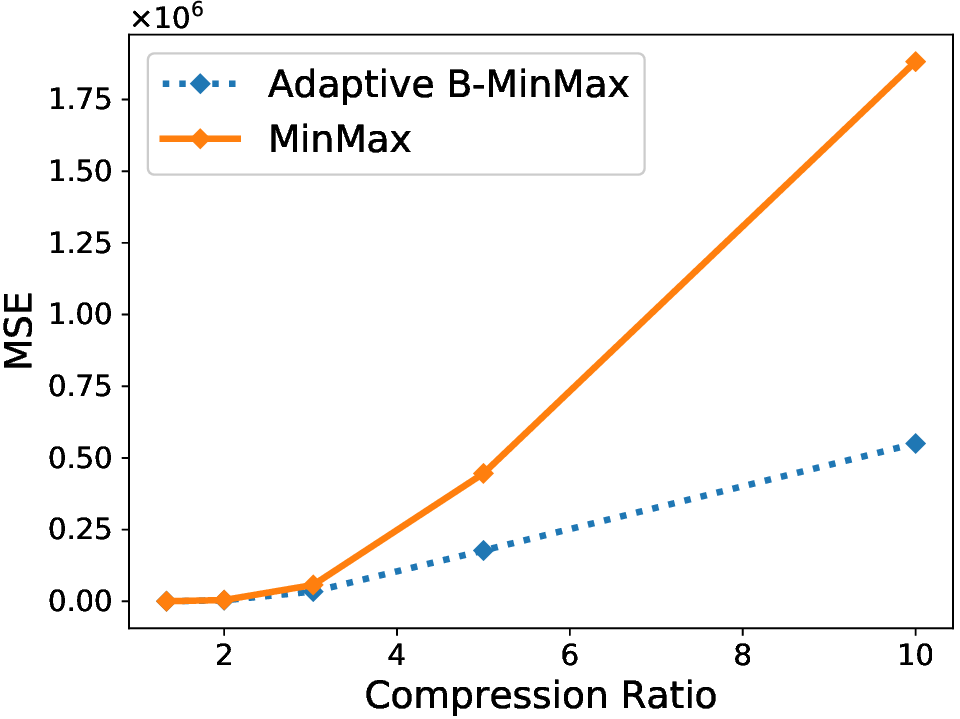}
      \caption{Zipf(1) Distribution}
      \label{fig:compress-zipf}
    \end{subfigure}
    %\par\bigskip
    \begin{subfigure}{.49\linewidth}
      \centering
      \includegraphics[width=\linewidth]{./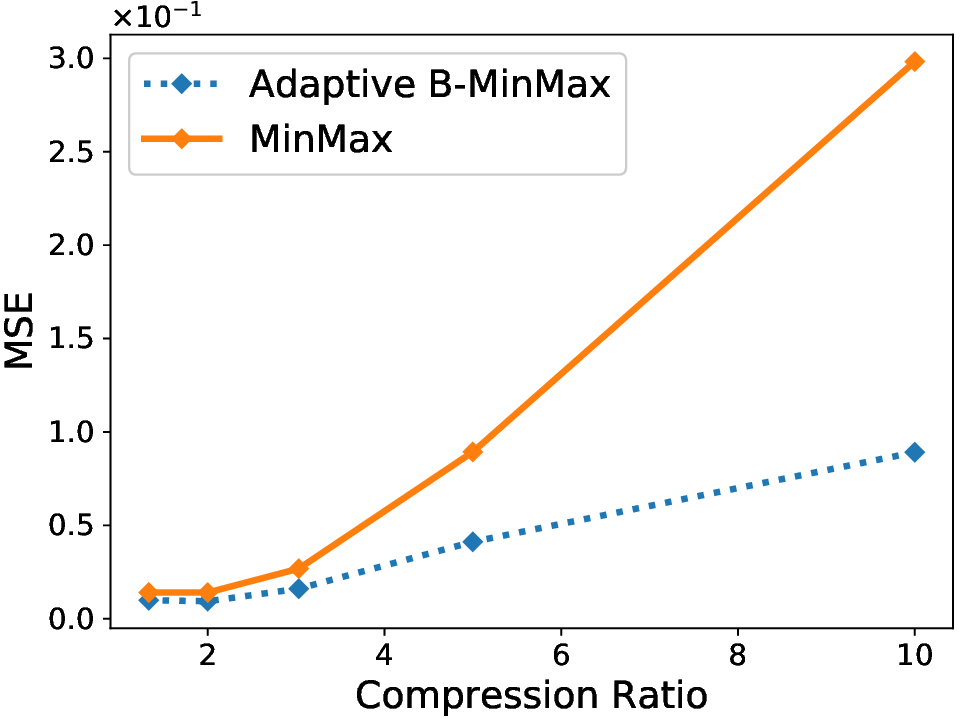}
      \caption{Resnet18 Weights}
      \label{fig:compress-other}
    \end{subfigure}
    \caption{MSE across different compression ratios, with $k=4$.}
    \label{fig:compress}
\end{figure}

We first evaluate the performance of adaptive B-minmax across various compression ratios (i.e. sample sizes).
Figure \ref{fig:compress} shows the results for this experiment.
We observe that as the compression ratio increases (i.e. sample size decreases), the relative performance of B-MinMax increases.
Using the ResNet18 data (figure \ref{fig:compress-other}), we observe a 54\% reduction in MSE at the 5x compression ratio and a 70\% reduction at the 10x ratio.

\subsection{Number of Sites}

\begin{figure}
    \centering
    \begin{subfigure}{.49\linewidth}
      \centering
      \includegraphics[width=\linewidth]{./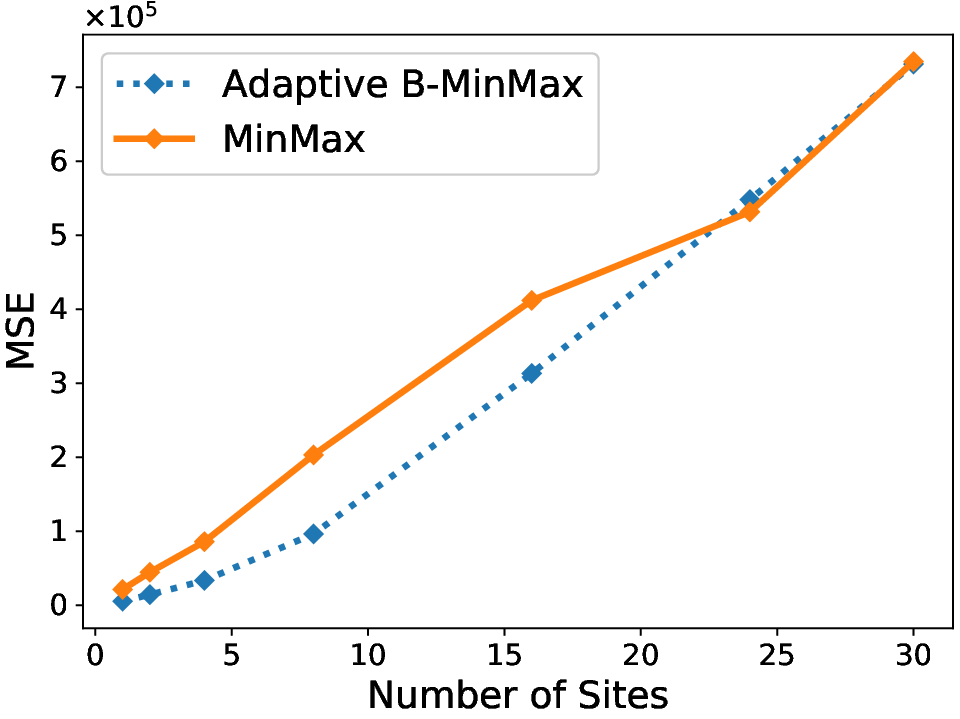}
      \caption{Zipf(1) Distribution}
      \label{fig:sites-zipf}
    \end{subfigure}
    %\par\bigskip
    \begin{subfigure}{.49\linewidth}
      \centering
      \includegraphics[width=\linewidth]{./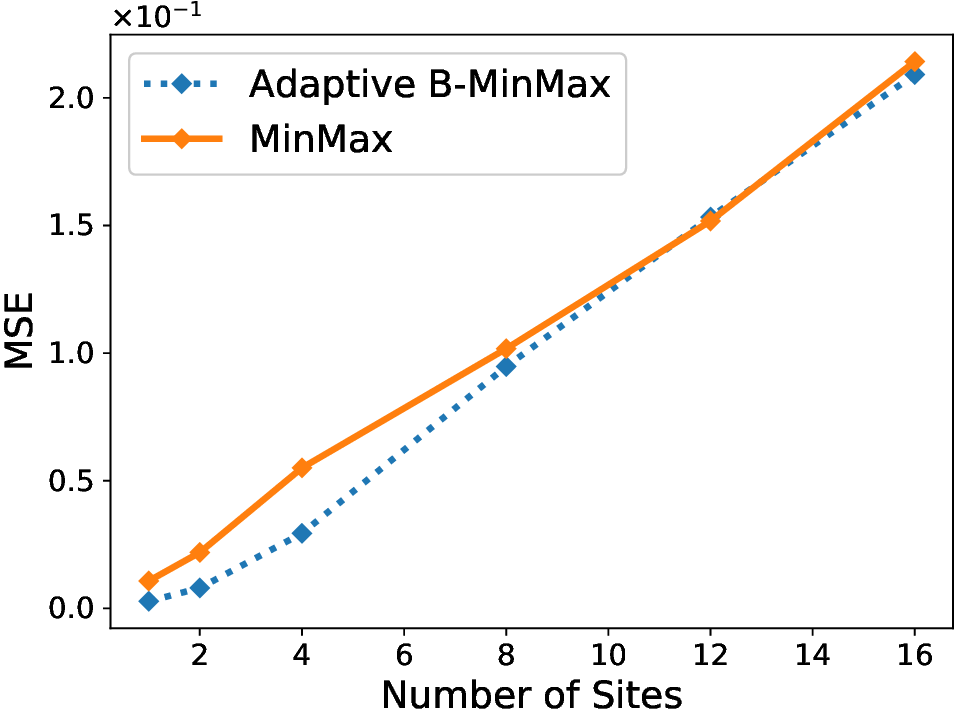}
      \caption{Resnet18 Weights}
      \label{fig:sites-other}
    \end{subfigure}
    \caption{MSE across different numbers of sites, with a compression ratio of 4x.}
    \label{fig:sites}
\end{figure}

Figure \ref{fig:sites} shows how adaptive B-minmax performs across an increasing number of sites.
In both experiments, we observe an improvement in terms of MSE for a small number of sites.
Eventually adaptive B-MinMax detects the MSE will be large, and defers to the unbiased approach.
For example, on the Zipfian data in figure \ref{fig:sites-zipf}, we observe this effect happens after aggregating data from 25 sites.

%\subsection{Model Weights}
%\subsection{Key Frequencies}
%\subsection{Degree of Overlap}
%If not all keys are present at a single location, B-MinMax should observe even larger benefits (since we essentially decompose into a single site case).
%This could happen with keys in a distributed relational database, as some keys may be tied to each location.

%\begin{figure}
%    \centering
%    \begin{subfigure}{.49\linewidth}
%      \centering
%      \includegraphics[width=\linewidth]{./plot/output/sites4x_zipf.eps}
%      \caption{Zipf(1) Distribution}
%      \label{fig:overlap-zipf}
%    \end{subfigure}
%    %\par\bigskip
%    \begin{subfigure}{.49\linewidth}
%      \centering
%      \includegraphics[width=\linewidth]{./plot/output/sites4x_zipf.eps}
%      \caption{Normal Distribution}
%      \label{fig:overlap-other}
%    \end{subfigure}
%    \caption{MSE across different degrees of overlap, (4 sites, compression ratio of 4x)}
%    \label{fig:overlap}
%\end{figure}

\subsection{Discussion}
The following factors highlight the situations in which B-MinMax offers an improvement in terms of MSE:
\begin{itemize}
    \item A large compression ratio is required (i.e. small sample sizes)
    \item The number of sites aggregated is small
\end{itemize}
Federated learning \textit{cross-silo}, rather than cross-device, could benefit from this approach, since the number of participating locations is small~\cite{silo3, silo1, silo2}.
Clustered federated learning could benefit, where only data within a cluster is aggregated.
We also note that if sites have very few keys in common, this effectively reduces the amount of aggregation, which is a benefit for B-MinMax.
Therefore, aggregating key-value pairs where some keys are unique to each location could benefit.

\section{Conclusion}
We proposed B-MinMax: a biased MinMax estimator which achieves a lower MSE in several settings.
We showed that B-MinMax is preferable with smaller sample sizes, few shared keys, or when aggregating across a small number of sites.
When the bias becomes too high, applications can dynamically adapt and prefer the unbiased MinMax approach. 

%\section*{Acknowledgement}
%This research was supported in part by the NSF under grant CNS-1908566.

\bibliographystyle{elsarticle-num}
\bibliography{refs}

\end{document}